\documentclass[11pt]{article}

\usepackage[ruled]{algorithm2e}
\usepackage{algorithmic}
\usepackage{amsfonts}
\usepackage{amsmath}
\usepackage{amssymb}
\usepackage{amsthm}
\usepackage{fullpage}
\usepackage{graphicx}
\usepackage{natbib}
\usepackage{tikz}
\usepackage{times}

\usepackage{mdframed}

\SetEndCharOfAlgoLine{}
\SetArgSty{}

\mdfsetup{frametitlealignment=\centering}
\mdfdefinestyle{offset}{backgroundcolor=white,linecolor=black,innerrightmargin=15pt,innermargin=10pt,outermargin=10pt,innertopmargin=.5\baselineskip,innerbottommargin=.5\baselineskip}

\newtheorem{theorem}{Theorem}[section]
\newtheorem{lemma}[theorem]{Lemma}

\newtheorem{corollary}[theorem]{Corollary}

\newenvironment{reminder}[1]{\smallskip
	\noindent {\bf Reminder of #1 }\em}{}

\newcommand{\BalanceSubproblem}{USM Balance Subproblem}
\newcommand{\OnlineUSM}{Online USM}

\newcommand{\E}{\mathbb{E}}
\newcommand{\cov}{\text{cov}}

\newcommand{\Yes}[1][i]{Z^+_{#1}}
\newcommand{\No}[1][i]{Z^-_{#1}}

\newcommand{\RewardAlg}{R_{alg}}
\newcommand{\CostYes}{C_{yes}}
\newcommand{\CostNo}{C_{no}}
\newcommand{\Potential}{\Phi}
\newcommand{\PotentialAlg}{\Potential_{alg}}
\newcommand{\PotentialYes}{\Potential_{yes}}
\newcommand{\PotentialNo}{\Potential_{no}}

\newcommand{\RR}{\mathbb{R}}
\newcommand{\sse}{\subseteq}
\newcommand{\nar}{no-$\alpha$-regret\xspace}

\newcommand{\prob}[2][]{\mathbf{Pr}\ifthenelse{\not\equal{}{#1}}{_{#1}}{}\!\left[#2\right]}
\newcommand{\expect}[2][]{\mathbf{E}\ifthenelse{\not\equal{}{#1}}{_{#1}}{}\!\left[#2\right]}

\title{An Optimal Algorithm for Online Unconstrained Submodular Maximization}

\author{
 	Tim Roughgarden\thanks{Department of Computer Science,
 		Stanford University, 474 Gates Building, 353 Serra Mall, Stanford, CA 94305.
 		This research was supported in part by NSF grant
 		CCF-1524062 and a Google Faculty Research Award.
 		Email: {\tt tim@cs.stanford.edu}.} \and Joshua R. Wang\thanks{Department of Computer Science,
 		Stanford University, 460 Gates Building, 353 Serra Mall, Stanford, CA 94305.
 		This research was supported in part by a Stanford Graduate Fellowship and NSF grant
                                  CCF-1524062.
 		Email: {\tt joshua.wang@cs.stanford.edu}.} 
 }

\begin{document}

\maketitle

\begin{abstract}
  We consider a basic problem at the interface of two fundamental
  fields: {\em submodular optimization} and {\em online learning}.  In
  the {\em online unconstrained submodular maximization (online USM)
  problem}, there is a universe $[n]=\{1,2,\ldots,n\}$ and a
  sequence of $T$ nonnegative (not necessarily monotone) submodular functions arrive over time.  The
  goal is to design a computationally efficient online algorithm,
  which chooses a subset of $[n]$ at each time step as a function only
  of the past, such that the accumulated value of the chosen subsets
  is as close as possible to the maximum total value of a fixed subset
  in hindsight.  Our main result is a polynomial-time 
  no-$\tfrac12$-regret algorithm for this
  problem, meaning that for every sequence of nonnegative
  submodular functions, the algorithm's expected total value is at least
  $\tfrac12$ times that of the best subset in hindsight, up to an
  error term sublinear in $T$.
  The factor of $\tfrac12$ cannot be improved upon by
  any polynomial-time online algorithm when the submodular functions are presented
  as value oracles.
  Previous work on the offline problem implies that picking a subset uniformly at
  random in each time step achieves zero $\tfrac14$-regret.

  A byproduct of our techniques is an explicit subroutine for the two-experts
  problem that has an unusually strong regret guarantee: the total value
  of its choices is comparable to twice the total value of either
  expert on rounds it did not pick that expert. This subroutine may be
  of independent interest.
\end{abstract}


\section{Introduction}

The problem we study, {\em online unconstrained submodular maximization (online
USM)}, lies in the intersection of two fundamental fields: {\em submodular
optimization} and {\em online learning}.

\paragraph{Submodular optimization.}
A nonnegative real-valued set function~$f:2^{[n]} \rightarrow \RR_+$ 
defined on the ground set $[n] = \{1,2,\ldots,n\}$
is {\em submodular}
if it exhibits diminishing returns, in the sense that
\[
f(S \cup \{i\}) - f(S) \le f(T \cup \{i\})-f(T)
\]
whenever $T \subseteq S$ and $i \notin S$.\footnote{Note that $f$ is
  not assumed to be monotone.}  Submodular functions can be used to
model a wide array of important problems, and for this reason have
been extensively studied for decades in theoretical computer science
(e.g.~\cite{shaddin}), combinatorial optimization
(e.g.~\cite{vondrak}), economics (e.g.~\cite{milgrom}), and machine
learning (e.g.~\cite{bach13}).  Perhaps the most basic problems in
submodular optimization are to minimize or maximize a submodular
function (without constraints).  While the former problem admits
(highly non-trivial) polynomial-time
algorithms~\citep{GLS88,IFF01,S00}, unconstrained maximization is hard
to approximate better than a factor of~$\tfrac 12$ in polynomial time
\citep{FMV11,DV12}.  Indeed, many fundamental $NP$-hard problems are
special cases of unconstrained submodular maximization (USM),
including undirected and directed versions of graph and hypergraph cut
problems (e.g.~\cite{GW95,HZ01}), maximum facility location problems
(e.g.~\cite{AS99}), and certain restricted satisfiability problems
(e.g.~\cite{GK05}).  Also, approximation algorithms for the USM
problem have been used as subroutines in many other algorithms,
including those for social network marketing \citep{HMS08}, market
expansion \citep{revsub}, and the computation of the least core value
in a cooperative game \citep{SU13}.

\paragraph{Online learning.}
The goal in online learning is to make good decisions over time with
knowledge only of the past.  In the standard ``experts'' setup,
there
is a known set $A$ of actions and a time horizon $T$.  At each time
step $t=1,2,\ldots,T$, the online algorithm has to first choose a
action $a^t \in A$, and an adversary subsequently chooses a reward
vector $r^t:A \rightarrow [0,1]$.
Given a history of actions $a^1,\ldots,a^T$
and reward vectors $r^1,\ldots,r^T$,
the {\em regret} of the algorithm is the
difference between the maximum total reward $\max_{a \in A}
\sum_{t=1}^T r^t(a)$ of a fixed action in hindsight
and the total reward $\sum_{t=1}^T r^t(a^t)$ earned by the algorithm.
The goal in online learning is to design algorithms with expected
regret $o(T)$ as $T \rightarrow \infty$.

Ignoring computational issues, this goal is well understood:
there are randomized algorithms (like ``Follow the Perturbed Leader''
\citep{KV05} and ``Multiplicative Weights'' \citep{CBMS07,FS97}) 
with worst-case expected regret $O(\sqrt{T \log
  |A|})$, and no algorithm can do better (see
e.g.~\cite{CL06}).  
However, the generic algorithms that achieve this regret bound require
computation at least linear in $|A|$ at each time step.  Thus, when the action
space $A$ has size exponential in the parameters of interest, these algorithms
are not computationally efficient.

\paragraph{Online USM.}
We consider the natural online learning version of the USM problem.
There is a universe~$[n]=\{1,2,\ldots,n\}$, known in advance.  
Actions correspond to subsets of the universe,
and submodular functions arrive online.
\begin{itemize}
\item At each time step $t=1,2,\ldots,T$:
\begin{itemize}
\item The algorithm picks a probability distribution $p^t$ over subsets of $[n]$.
\item An adversary picks a submodular function $f^t:2^{[n]} \rightarrow
  [0,1]$.
\item A subset $S^t$ is chosen according to the distribution $p^t$,
  and the algorithm reaps a reward of $f^t(S^t)$.
\item The adversary reveals $f^t$ to the algorithm.
\end{itemize}
\end{itemize}

The goal is to design a computationally efficient online algorithm
with worst-case expected regret as small as possible.
Applying the generic no-regret algorithms to this problem requires
per-step computation exponential in $n$.
Indeed, unless $RP=NP$, {\em there does not exist a polynomial-time
  no-regret algorithm for the online USM problem}.\footnote{Standard
  arguments show that any
polynomial-time (randomized) \nar algorithm for online USM
yields a polynomial-time randomized
$(\alpha+\epsilon)$-approximation algorithm for the offline USM
problem for every constant $\epsilon > 0$.  The basic idea is to feed
the offline input~$f$ into the online algorithm over and over again,
and return the best of the subsets output by the online
algorithm.  
Since \cite{DV12} prove that offline
USM is hard to approximate to within a factor better than $\tfrac
12$ (assuming $NP \neq RP$), the same lower bound carries over to the
online version of the problem.} 
This negative result motivates
following in the footsteps of~\cite{KKL09} and defining, for $\alpha
\in [0,1]$,
the {\em $\alpha$-regret} of an algorithm (w.r.t.\ actions
$S^1,\ldots,S^T$ and functions $f^1,\ldots,f^T$) as
the difference between 
$\alpha$ times the cumulative reward of the best fixed action in hindsight
and that earned by the algorithm:
\begin{equation}\label{eq:alpha}
\alpha \cdot \max_{S \sse [n]} \sum_{t=1}^T f^t(S)
- \sum_{t=1}^T f^t(S^t).
\end{equation}
A {\em no-$\alpha$-regret algorithm} is one whose worst-case
expected $\alpha$-regret is bounded by $O(T^c)$ for some constant $c < 1$
(with the big-O suppressing any dependence on $n$).
The worst case is taken over the adversary's choice of functions, and
the expectation is over the coin flips of the algorithm.
A basic question is:
\begin{itemize}

\item [] {\em What is the largest constant $\alpha$ such that there exists a
computationally  efficient no-$\alpha$-regret algorithm for online USM?}

\end{itemize}
By ``efficient,'' we mean that the number of operations performed by
the algorithm in each time step is bounded by some polynomial function
of~$n$, the size of the universe.\footnote{Unless otherwise noted, we
  assume that each submodular function $f$ in the input: (i) has
  description length polynomial in $n$; and (ii) given a subset $S
  \sse [n]$, the value $f(S)$ of~$f$ can be evaluated in
  time polynomial in $n$.  All of our results also hold in the ``value
  oracle'' model, with submodular functions given as ``black boxes''
  that support value queries.  Here, our online algorithm uses only
  polynomially many (in $n$) value queries and polynomial additional
  computation.  The lower bound continues to apply and becomes
  unconditional in the value oracle model (following \cite{FMV11}).}

Our main result is a tight answer to this question: 
\begin{itemize}

\item [] {\em $\alpha =
\tfrac{1}{2}$ is achievable, and no larger value of $\alpha$ can be
achieved (unless $RP = NP$).}

\end{itemize}
Prior to our work, the best result known (which follows from
\cite{FMV11}) was that $\alpha = \tfrac 14$ can be achieved by picking a
subset uniformly at random in every time step.

\paragraph{Offline-to-online reductions.}
Our results also contribute to the burgeoning line of work on
``offline-to-online reductions.''  Here, the question is whether or
not an efficient $\alpha$-approximate oracle for the offline version of a
problem (i.e., computing the best strategy in hindsight, given a
sequence of implicitly defined reward vectors) can be
translated in ``black-box'' fashion to an efficient \nar online
algorithm. 

The existing offline-to-online reductions
apply only to linear online optimization
problems~\citep{AK08,FHT13,KV05,KKL09} or require
an exact best-response oracle~\citep{D+17,Z03}, and thus do not apply
to the USM problem.
Meanwhile, \cite{HK16} prove that there is no fully
general black-box reduction: there exists a (somewhat artificial)
problem such that, even with an exact oracle for the offline
version of the problem, achieving sublinear regret requires
a super-polynomial amount of computation.  Thus for some problems,
there is a fundamental difference between what is possible offline
versus online.  It remains an open question whether or not there is a
``natural'' optimization problem with a provable separation between
its offline and online versions.

Online USM is arguably one of the most natural online problems where
the state-of-the-art is silent on whether or not there are online
guarantees matching what is possible offline, and this paper resolves
this question (in the positive).

\subsection{Related Work}

\cite{FMV11} were the first to
rigorously study the general USM problem. They showed that a uniformly
random subset $S$ achieves a $\frac14$-approximation (in
expectation). They also provided an algorithm, based on noisy local
search, with an approximation
guarantee of $\frac25$.
Finally, they proved that in the value oracle model, achieving an
approximation of $\frac12 + \epsilon$ requires an exponential number
of queries in the worst case. The noisy local search technique was
improved slightly by 
\cite{GV11} and further by 
\cite{FNS11}. A breakthrough occurred
when 
\cite{BFNS15} showed that a simple strategy
could be used to achieve a (tight) $\frac12$ approximation
ratio. Their algorithm was randomized, but was later derandomized by
\cite{BF16}.
The initial lower bound in \cite{FMV11}
was generalized by
\cite{DV12}, who proved the same bound even for succinctly
represented functions (polynomial description and evaluation time),
conditioned on $RP \neq NP$.

Online submodular {\em minimization} is considered by 
\cite{HK12}.  Here, the offline problem can be solved exactly
with a polynomial number of value queries (e.g.~\cite{GLS88}), and the
main result in~\cite{HK12} is an efficient no-regret algorithm for the
online setting.  Some extensions to online submodular minimization
with constraints are given by 
\cite{JB11}.
\cite{SG09} considered a fairly general online
submodular maximization problem. In particular, their problem captures
the online problem where the algorithm receives a series of monotone
submodular functions and wants to maximize them subject to a knapsack
constraint.

Finally, 
\cite{BFS15} study a problem that they call
``online submodular maximization,'' but where there is only a single
function and the elements of the universe arrive over time.  This
version of the problem is in the tradition of competitive online
algorithms rather than no-regret learning algorithms, and hence is
quite different from the online USM problem that we study.

\subsection{Our Techniques}

We now provide an overview of the main ideas used in obtaining a no-$1/2$-regret algorithm for \OnlineUSM{}. The overall argument is divided into two phases. In the first phase, whose main result is captured in Theorem~\ref{thm:balance-subproblem}, we propose a general class of algorithms for the \OnlineUSM{} algorithm, based on the Buchbinder et al. analysis for the offline problem~\citep{BFNS15}. This class is parameterized by our choice of subroutine, and the main result of this phase states that the performance of our algorithm with respect to \OnlineUSM{} is precisely characterized by the performance of its subroutine with respect to a specific task: the \BalanceSubproblem{}. Stopping here already yields a novel result: using a no-regret algorithm for the (two) experts problem, such as Multiplicative Weights (see, e.g., \cite{CL06}), as our subroutine would give us a no-$1/3$-regret algorithm for \OnlineUSM{}. The previously best known is returning a uniform random point in every round, which is a no-$1/4$-regret algorithm.

In the second phase of our argument, we focus our efforts on designing
a good subroutine for the \BalanceSubproblem{}. The main result is
Theorem~\ref{thm:balancer} in which we prove that our proposed
subroutine satisfies the condition which results in a no-$1/2$-regret
algorithm for \OnlineUSM{}.\footnote{It is also possible to apply
  Blackwell's Approachability Theorem~\citep{Blackwell56} to get a
  subroutine which satisfies this condition as well.  (Thanks to
  anonymous reviewer for pointing this out.)  We nevertheless provide
  our own subroutine along with its proof, as this makes the entire
  online USM algorithm and its analysis more explicit.} Using any
algorithm with a no-regret guarantee for the (two) experts problem is
provably insufficient; for \emph{every} such algorithm the result is
an algorithm for \OnlineUSM{} (when using the aforementioned no-regret
algorithm as a subroutine) with \emph{linear} expected
$1/2$-regret. In other words, the binary-action task we are attempting
to solve really is distinct from the experts problem. Roughly
speaking, the situtation in the \BalanceSubproblem{} is as
follows. The algorithm wants to make progress, but at the same time
the adversary is advancing its own goals on two different fronts. The
problem is named after the need to balance the losses incurred on
these two fronts; an algorithm that focuses on the experts problem
only considers the total loss. To complicate matters further, one of
the possible actions may have a negative value, and choosing such an
action incurs two types of loss: it both sets back the algorithm while
advancing the adversary's agenda. One key technical contribution is a
potential-based analysis, which succinctly captures the relationship
between the algorithm's state and the status of the no-regret
guarantee we want to prove; much of the complexity is hidden in
identifying the appropriate potential functions.

Finally, we conclude by generalizing the analysis to work against adaptive adversaries. The main contribution here is a covariance-based argument which guards against the adaptive adversary blowing up the variance of our algorithm by choosing its future inputs to depend on the results of past coin flips.

\subsection{Organization}

The first phase of our proof is conducted in Section~\ref{s:framework}; we provide a framework for \OnlineUSM{}, identify the subproblem of interest, and give our main reduction. Our proposed subroutine and main result are stated in Section~\ref{s:half}, and the proof is carried out in Appendix~\ref{a:half}. Finally, in Appendix~\ref{a:adaptive}, we discuss the generalization to adaptive adversaries.

\section{An \OnlineUSM{} Framework}\label{s:framework}

We begin by presenting our framework for \OnlineUSM{}, which is based on the BFNS offline algorithms~\citep{BFNS15}.

In order to make the offline problem tractable, these algorithms transform the task of choosing a subset $S \subseteq [n]$, which has $2^n$ possible choices, into the $n$ tasks of choosing whether element $i$ should be in $S$ or not, each of which have just two possible choices. To be more specific, we begin with two candidate solutions: $X_0$ as the empty set and $Y_0$ as the entire universe. We then proceed in $n$ iterations. In iteration $i$, we want to make the two candidate solutions agree on element $i$. Hence we must either add $i$ to $X_{i-1}$ or remove $i$ from $Y_{i-1}$. To decide which, we compute the marginal values of these two options according to our function $f$. In particular, let:
\begin{align*}
  \alpha_i &= f(X_{i-1} \cup \{i\}) - f(X_{i-1}), \\
  \beta_i &= f(Y_{i-1} \setminus \{i\}) - f(Y_{i-1}).
\end{align*}

Roughly speaking, we want to favor the larger of these two values. Due to submodularity, $\alpha_i + \beta_i \ge 0$ always (since
$X_{i-1} \subseteq Y_{i-1}$; the two sets agree on all elements up to
$i-1$ after which $Y_{i-1}$ has everything and $X_{i-1}$ has
nothing). The 
analysis in \cite{BFNS15} 
shows that deterministically picking based on the larger value gives a $1/3$-approximation overall. However, randomly choosing to include $i$ with probability $\frac{\alpha_i}{\alpha_i + \beta_i}$ and to remove $i$ with probability $\frac{\beta_i}{\alpha_i + \beta_i}$ can\footnote{Only necessary when both $\alpha_i$ and $\beta_i$ are both positive. If only one value is positive, we need to always pick that choice.} improve this to a $\frac12$-approximation overall.

This suggests an online framework which uses specialized binary-action subroutines to make these smaller decisions. Algorithm~\ref{alg:online-usm-framework} implements this idea, using the after-the-fact marginal values of the most recent submodular function to provide feedback to its subroutines.

\begin{algorithm}
\caption{\OnlineUSM{} Framework.}
\label{alg:online-usm-framework}
\vspace{.5\baselineskip}

  \SetKwInOut{Input}{input}
  \SetKwInOut{Output}{output}
  \Input{Subroutine $A$ (binary-action), submodular functions $\{f^t: [n] \to [0, 1]\}_t$}
  \Output{Subsets $\{S^t \subseteq [n]\}_t$}
  Run $n$ copies of $A$: $A_1, \ldots, A_n$. \\
  \For{round $t = 1$ to $T$}{
    Initialize subset $X^t_0 \leftarrow \{\}$ and subset $Y^t_0 \leftarrow [n]$. \\
    \For{$i = 1$ to $n$}{
      Ask $A_i$ whether $i$ should be in $S^t$. \\
      If $A_i$ says yes, set $X^t_i \leftarrow X^t_{i-1} \cup \{i\}$ and $Y^t_i \leftarrow Y^t_{i-1}$. \\
      If $A_i$ says no, set $X^t_i \leftarrow X^t_{i-1}$ and $Y^t_i \leftarrow Y^t_{i-1} \setminus \{i\}$.
    }
    Output $X^t_n$ for round $t$, and receive as input the submodular function $f^t$. \\
    \For{$i = 1$ to $n$}{
      Let $\alpha^t_i \leftarrow f^t(X^t_{i-1} \cup \{i\}) - f^t(X^t_{i-1})$. \\
      Let $\beta^t_i \leftarrow f^t(Y^t_{i-1} \setminus \{i\}) - f^t(Y^t_{i-1})$. \\
      Report $(\alpha^t_i, \beta^t_i)$ to $A_i$ as the rewards for yes and no, respectively.
    }
  }
\end{algorithm}

What guarantees do we need on the subroutine in our framework to get a no-regret guarantee for \OnlineUSM{}? We present the necessary guarantees as another online problem, which we call the \BalanceSubproblem{}.

\subsection{The \BalanceSubproblem{}}

The \BalanceSubproblem{} is a binary-action online problem. In each
round $t$, the algorithm chooses ``yes'' or ``no'' and then the
adversary reveals a point $(\alpha^t, \beta^t)$. Based on the
algorithm's decision and the adversary's point, three quantities are
updated. The algorithm has a total accumulated reward, denoted
$\RewardAlg$. The adversary accumulates two \emph{separate} piles of
missed opportunities, which will be denoted $\CostYes$ and $\CostNo$.

The adversary's point $(\alpha^t, \beta^t)$ lies in $\mathbb{R}^2$ subject to three constraints:
\begin{itemize}
  \item $-1 \le \alpha^t \le +1$,
  \item $-1 \le \beta^t \le +1$, and
  \item $\alpha^t + \beta^t \ge 0$.
\end{itemize}
The allowed space of points is illustrated in Figure~\ref{fig:three-moves}. When the algorithm chooses yes, $\RewardAlg$ increases by $\frac12 \alpha^t$ and $\CostNo$ increases by $\beta^t$. If it instead chooses no, then $\RewardAlg$ increases by $\frac12 \beta^t$ and $\CostYes$ increases by $\alpha^t$.\footnote{Why is the algorithm reward seemingly half of what it should be? The heart of the matter is that because our \OnlineUSM{} analysis is keeping track of two candidate solutions, it winds up double-counting progress. We correct for this factor with our rewards.}

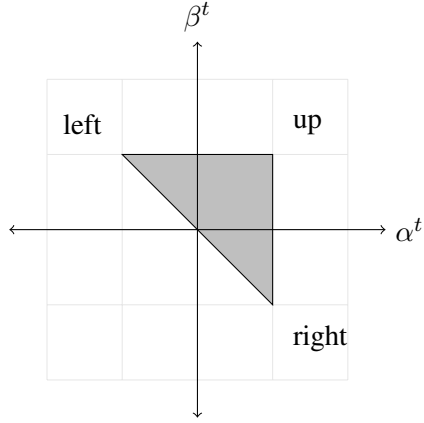
\begin{figure}
\centering
\begin{tikzpicture}[scale=1.0]
  \draw[step=1cm,black!10,very thin] (-2, -2) grid (2, 2);
  \draw[fill=black!25] (1, 1) -- (1, -1) -- (-1, 1) -- cycle;
  
  \draw[<->] (-2.5, 0) -- (2.5, 0) node[right] {$\alpha^t$};
  \draw[<->] (0, -2.5) -- (0, 2.5) node[above] {$\beta^t$};

  \node[label=above right:up] (u) at (1, 1) {};
  \node[label=above left:left] (l) at (-1, 1) {};
  \node[label=below right:right] (r) at (1, -1) {};
\end{tikzpicture}
\caption{The \BalanceSubproblem{} adversary's possible moves are convex combinations of up $(+1, +1)$, right $(+1, -1)$, and left $(-1, +1)$.}
\label{fig:three-moves}
\end{figure}

We say that the $\alpha$-regret of an algorithm for the \BalanceSubproblem{} is
\[
  \alpha \cdot \max \left( \CostYes, \CostNo \right) - \RewardAlg.
\]
As usual, we say that an algorithm has no-$\alpha$-regret if its worst-case expected $\alpha$-regret is bounded by $O(T^c)$ for some constant $c < 1$, with the big-O supressing any dependence on $n$. The worst case is still taken over the adversary's choice of points, and the expectation is over coin flips of the algorithm. If we have a no-$\alpha$-regret algorithm for the \BalanceSubproblem{}, its $\RewardAlg$ is comparable to the better of $\CostYes$ and $\CostNo$, in expectation.

We have been building up to the following theorem reducing \OnlineUSM{} to the \BalanceSubproblem{}:
\begin{theorem}\label{thm:balance-subproblem}
  For any constant $\alpha > 0$, when given a subroutine $A$ with $g(T)$ $\alpha$-regret for the \BalanceSubproblem{}, Algorithm~\ref{alg:online-usm-framework} has $O(n \cdot g(T))$ $\frac{\alpha}{1+\alpha}$-regret.
\end{theorem}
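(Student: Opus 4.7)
My plan is to adapt the BFNS offline analysis to the online setting by tracking, for each round $t$ and each $i \in \{0, 1, \ldots, n\}$, a hybrid comparator set $S^{*,t}_i = (S^* \cup X^t_i) \cap Y^t_i$, which interpolates between $S^*$ at $i = 0$ and $X^t_n$ at $i = n$, and a potential $\phi^t_i = \tfrac{1}{2}\bigl[f^t(X^t_i) + f^t(Y^t_i)\bigr]$, which interpolates between $\tfrac{1}{2}\bigl[f^t(\emptyset) + f^t([n])\bigr]$ and $f^t(X^t_n)$. Here $S^*$ is the best subset in hindsight for the Online USM instance.

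First, I record a telescoping identity that links each subroutine's reward to the algorithm's Online USM reward. Whenever $A_i$ says yes on round $t$, $\phi^t_i - \phi^t_{i-1} = \tfrac{1}{2}\alpha^t_i$, and whenever it says no, $\phi^t_i - \phi^t_{i-1} = \tfrac{1}{2}\beta^t_i$; in either case this is exactly the $\RewardAlg$ increment earned by $A_i$ at round $t$ in its Balance Subproblem instance. Summing over $i$ and $t$ and using nonnegativity of $f^t$ yields
\[
\sum_{i=1}^n \RewardAlg^i \;=\; \sum_t \Bigl[f^t(X^t_n) - \tfrac{1}{2}\bigl(f^t(\emptyset) + f^t([n])\bigr)\Bigr] \;\le\; \sum_t f^t(X^t_n),
\]
which bounds the subroutines' total reward by the Online USM reward.

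The heart of the argument is a four-way case analysis bounding the per-step loss $f^t(S^{*,t}_{i-1}) - f^t(S^{*,t}_i)$ by the matching Balance Subproblem quantity. When $A_i$'s choice agrees with $S^*$ on coordinate $i$ (yes with $i \in S^*$, or no with $i \notin S^*$), the hybrid set is unchanged, so the loss is zero. When $A_i$ says no but $i \in S^*$, the containment $X^t_{i-1} \subseteq S^{*,t}_{i-1} \setminus \{i\}$ plus submodularity gives a loss bound of $\alpha^t_i$, which is exactly the increment to $\CostYes^i$ that $A_i$ absorbs on that round; symmetrically, when $A_i$ says yes but $i \notin S^*$, the containment $S^{*,t}_{i-1} \subseteq Y^t_{i-1} \setminus \{i\}$ gives a loss bound of $\beta^t_i$, the increment to $\CostNo^i$. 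Telescoping in $i$, summing over $t$, and splitting elements by membership in $S^*$ produces
\[
\sum_t \bigl[f^t(S^*) - f^t(X^t_n)\bigr] \;\le\; \sum_{i \in S^*} \CostYes^i + \sum_{i \notin S^*} \CostNo^i \;\le\; \sum_{i=1}^n \max\!\bigl(\CostYes^i,\, \CostNo^i\bigr).
\]

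Finally, I apply the Balance Subproblem guarantee $\E\bigl[\alpha \cdot \max(\CostYes^i, \CostNo^i) - \RewardAlg^i\bigr] \le g(T)$ to each of the $n$ subroutines and combine with the reward bound above. This gives $\alpha \sum_t f^t(S^*) \le (1+\alpha)\,\E\bigl[\sum_t f^t(X^t_n)\bigr] + n \cdot g(T)$, which rearranges to the claimed $\tfrac{\alpha}{1+\alpha}$-regret bound of $n \cdot g(T)/(1+\alpha) = O(n \cdot g(T))$. I expect the main obstacle to be the submodularity step: the containments must be chosen so that each nonzero per-step loss is bounded by exactly the quantity ($\alpha^t_i$ when $A_i$ says no, $\beta^t_i$ when $A_i$ says yes) that the Balance Subproblem records on the corresponding side of the $\CostYes^i$ / $\CostNo^i$ ledger, so that the subsequent $\max$ step and the reduction to the regret guarantee line up cleanly.
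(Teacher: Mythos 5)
Your proposal is correct and essentially the same as the paper's proof: your hybrid comparator $S^{*,t}_i = (S^* \cup X^t_i)\cap Y^t_i$ coincides coordinate-by-coordinate with the paper's $OPT^t_i$, your telescoping potential $\phi^t_i$ is the paper's identity linking the subroutines' rewards to $f^t(X^t_n)$ (up to the nonnegativity of $f^t(\emptyset)$ and $f^t([n])$), and your submodularity containments and the final combination with the per-element $\alpha$-regret guarantee match the paper's calculation. The only point the paper spells out that you omit is why each $A_i$'s guarantee (stated against an oblivious adversary) still applies even though its input sequence $(\alpha^t_i,\beta^t_i)_t$ is random through the outputs of $A_1,\ldots,A_{i-1}$ --- the dependency structure is acyclic, so one can condition on those earlier coins and average --- which is worth a sentence in a full write-up.
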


\begin{proof}
  In this proof, we use $\Yes$ to denote the rounds where the subroutine $A_i$ returned yes; $\No$, no.
  
  For the purposes of comparison, we also track the evolution of a \emph{third} set. For each round $t$, define $OPT^t_0$ to be offline optimal set (the best fixed set over all rounds, independent of $t$). Let $OPT^t_i = OPT^t_{i-1} \cup \{i\}$ if $t \in \Yes$ and $OPT^t_i = OPT^t_{i-1} \setminus \{i\}$ if $t \in \No$.  In English, $OPT^t_i$ begins (when $i=0$) at the optimal answer and has its entries changed to match the decisions made within round $t$ until it finishes (when $i = n$) at the algorithm's choice: $OPT^t_n = X^t_n = Y^t_n$. Intuitively, our algorithm will perform well if it manages to grow $f^t(X^t_i)$ and/or $f^t(Y^t_i)$ while lowering the value of $f^t(OPT^t_i)$ relatively little in comparison.

  Armed with these three evolving sets, we now want to know how the decisions of our subroutines impact their values. Suppose that in round $t$, the $i$th subroutine says yes. By construction, we know that:
  \begin{itemize}
    \item $f^t(X^t_i) = f^t(X^t_{i-1}) + \alpha^t_i$,
    \item $f^t(Y^t_i) = f^t(Y^t_{i-1})$,
    \item if element $i$ was in $OPT$, then $f^t(OPT^t_i) = f^t(OPT^t_{i-1})$ because $OPT^t_i = OPT^t_{i-1}$,
    \item if element $i$ was not in $OPT$, then $f^t(OPT^t_i) \ge f^t(OPT^t_{i-1}) - \beta^t_i$ by the submodularity of $f^t$, noting that $Y^t_{i-1}$ is a superset of $OPT^t_{i-1}$.
  \end{itemize}
  By the same reasoning, when the $i$th subroutine says no, all of the following happen:
  \begin{itemize}
    \item $f^t(X^t_i) = f^t(X^t_{i-1})$,
    \item $f^t(Y^t_i) = f^t(Y^t_{i-1}) + \beta^t_i$,
    \item if element $i$ was not in $OPT$, then $f^t(OPT^t_i) = f^t(OPT^t_{i-1})$, again because $OPT^t_i = OPT^t_{i-1}$, and
    \item if element $i$ was in $OPT$, then $f^t(OPT^t_i) \ge f^t(OPT^t_{i-1}) - \alpha^t_i$, again by submodularity of $f^t$, noting that $X^t_{i-1}$ is a subset of $OPT^t_{i-1}$.
  \end{itemize}
  
  Table~\ref{tab:usm-iopt} depicts these changes for the case where element $i$ is in $OPT$.
  \begin{table}
  \centering
  \begin{tabular}{ c | c | c }
    Option & Increase of $f^t(X^t_i) + f^t(Y^t_i)$ & Decrease of $f^t(OPT^t_i)$ \\ \hline
    Choosing $i$ & $\alpha^t_i$ & 0 \\
    Not Choosing $i$ & $\beta^t_i$ & $\le \alpha^t_i$ \\
  \end{tabular}
  \caption{How values change when $i \in OPT$.}
  \label{tab:usm-iopt}
  \end{table}
  
  Now, fix an element $i \in [n]$. Summing the first two bullets above (for both cases) over all the rounds, we have:
  \begin{align}
    \sum_t \left[ f^t(X^t_i) - f^t(X^t_{i-1}) + f^t(Y^t_i) - f^t(Y^t_{i-1}) \right]
      &= \sum_{t \in \Yes} \alpha^t_i + \sum_{t \in \No}
        \beta^t_i. \label{eqn:fn} 
  \end{align}
  
  We finish by summing the last two bullets above (for both cases) over all the rounds.
  \begin{align}
    \sum_{t=1}^T \left( f^t(OPT^t_{i-1}) - f^t(OPT^t_i) \right)
      &\le \begin{cases}
        \sum_{t \in \No} \alpha^t_i
          &\qquad \text{if } i \in OPT \\
        \sum_{t \in \Yes} \beta^t_i
          &\qquad \text{if } i \not \in OPT
      \end{cases} \nonumber \\
      &\le \max \left(
        \sum_{t \in \No} \alpha^t_i,
        \sum_{t \in \Yes} \beta^t_i
      \right) \label{ieq:opt}
  \end{align}
  
  We must now discuss an important issue before we can proceed with the proof. Suppose that our subroutine $A_i$ is only effective against oblivious adversaries, not adaptive adversaries. We must ensure that its input (namely the sequence $(\alpha^t_i, \beta^t_i)_t$) does not depend on its output. Fortunately, this is the case. In addition to depending on the actual adversary, this sequence depends on the output of subroutines $A_1, \ldots, A_{i-1}$ over all rounds. If the actual adversary is oblivious, then it does not depend on the output of $A_i$. Since they come before $A_i$, none of $A_1, \ldots, A_{i-1}$ depend on $A_i$'s output either (in particular, their inputs do not, so their outputs cannot either)! Put another way, the dependency graph between our subroutines is a directed acyclic graph. If this was not the case, we would have required that they be impervious to adaptive adversaries, in order to handle each other's output. Luckily, we may safely proceed with algorithms that just handle oblivious adversaries. We may now invoke the $\alpha$-regret guarantee for our subroutine $A_i$. Written out completely, the definition of $\alpha$-regret for the \BalanceSubproblem{} states that:
  \begin{align}
    \alpha \cdot
      \expect{ \max
        \left(
          \underbrace{\sum_{t \in \No} \alpha^t_i}_{\CostYes},
          \underbrace{\sum_{t \in \Yes} \beta^t_i}_{\CostNo}
        \right)
      }
      - \expect{ \underbrace{\sum_{t \in \Yes} \frac12 \alpha^t_i + \sum_{t \in \No} \frac12 \beta^t_i}_{\RewardAlg} }
      &\le g(T). \label{ieq:regret}
  \end{align}
  where the expectation is over the random coin flips of $A_i$ and also $A_1, \ldots, A_{i-1}$. We now combine lines~\ref{eqn:fn}-\ref{ieq:regret}.
  \begin{align*}
    \alpha \cdot
      \expect{ \sum_{t=1}^T \left( f^t(OPT^t_{i-1}) - f^t(OPT^t_i) \right) } \\
      - \frac12 \expect{ \sum_t \left[ f^t(X^t_i) - f^t(X^t_{i-1}) + f^t(Y^t_i) - f^t(Y^t_{i-1}) \right] }
      &\le g(T)
  \end{align*}
  This implements our stated plan; the growth of $f^t(X^t_i)$ and/or $f^t(Y^t_i)$ roughly dominates the amount that $f^t(OPT^t_i)$ drops. We now sum over the elements $i \in [n]$.
  \begin{align*}
    \alpha \cdot
      \expect{
        \sum_{t=1}^T \left( \underbrace{f^t(OPT^t_0)}_{=f^t(OPT)}
          - \underbrace{f^t(OPT^t_n)}_{=f^t(ALG^t)} \right) } \\
      - \frac12 \expect{
        \sum_t \left[ \underbrace{f^t(X^t_n)}_{=f^t(ALG^t)}
          - \underbrace{f^t(X^t_0)}_{=f^t(\emptyset)\ge 0}
          + \underbrace{f^t(Y^t_n)}_{=f^t(ALG^t)}
          - \underbrace{f^t(Y^t_0)}_{=f^t([n]) \ge 0} \right] }
      &\le n \cdot g(T)
  \end{align*}
  We finish with some slight rearranging.
  \begin{align*}
    \alpha \cdot \expect{ \sum_{t=1}^T \left( f^t(OPT) - f^t(ALG^t) \right) } - \expect{ \sum_t f^t(ALG^t) } &\le n \cdot g(T) \\
    \alpha \cdot \sum_{t=1}^T f^t(OPT) - (1 + \alpha) \cdot \expect{ \sum_t f^t(ALG^t) } &\le n \cdot g(T) \\
    \frac{\alpha}{1+\alpha} \cdot \sum_{t=1}^T f^t(OPT) - \cdot \expect{ \sum_t f^t(ALG^t) } &\le \frac{1}{1+\alpha} \cdot n \cdot g(T)
  \end{align*}
  Such a subroutine gives Algorithm~\ref{alg:online-usm-framework} the stated $\frac{\alpha}{1+\alpha}$-regret.
\end{proof}

With this reduction in hand, we can make a key observation. We claim that any no-regret algorithm for the (two) experts problem is also a no-$\frac12$-regret algorithm for the \BalanceSubproblem{}. Suppose that an algorithm has $g(T)$ regret for the (two) experts problem and it says yes in rounds $\Yes$ and no in rounds $\No$.
\begin{align*}
  \expect{ \sum_{t=1}^T \alpha^t_i } - \expect{ \sum_{t \in \Yes} \alpha^t_i + \sum_{t \in \No} \beta^t_i } &\le g(T) \\
  \expect{ \sum_{t \in \No} \alpha^t_i } - \expect{ \sum_{t \in \No} \beta^t_i } &\le g(T) \\
  \expect{ \sum_{t=1}^T \beta^t_i } - \expect{ \sum_{t \in \Yes} \alpha^t_i + \sum_{t \in \No} \beta^t_i } &\le g(T) \\
  \expect{ \sum_{t \in \Yes} \beta^t_i } - \expect{ \sum_{t \in \Yes} \alpha^t_i } &\le g(T) \\
  \expect{ \underbrace{\sum_{t \in \No} \alpha^t_i}_{=\CostYes}
              + \underbrace{\sum_{t \in \Yes} \beta^t_i}_{=\CostNo} }
    - \expect{ \underbrace{\sum_{t \in \No} \beta^t_i + \sum_{t \in \Yes} \alpha^t_i}_{=2\RewardAlg} } &\le 2 \cdot g(T) \\
  \frac12 \cdot \expect{ \max (\CostYes, \CostNo) } - \expect{ \RewardAlg } &\le g(T)
\end{align*}
In other words, the algorithm also has $g(T)$ $\frac12$-regret for the \BalanceSubproblem{}, as we claimed earlier. Combining this with Theorem~\ref{thm:balance-subproblem}, we have arrived at the following partial result:
\begin{corollary}\label{cor:one-third}
  When given a subroutine $A$ with $g(T)$ regret for the (two) experts problem, Algorithm~\ref{alg:online-usm-framework} has $O(n \cdot g(T))$ $1/3$-regret.
\end{corollary}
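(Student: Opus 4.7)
The plan is to chain Theorem~\ref{thm:balance-subproblem} with $\alpha = \tfrac12$, since $\frac{\alpha}{1+\alpha} = \frac13$ at that value. So the only thing to prove is that any no-regret algorithm $A$ for the standard two-experts problem, when fed the sequence $(\alpha^t, \beta^t)$ as reward vectors, is a no-$\frac12$-regret algorithm for the \BalanceSubproblem{} with regret $O(g(T))$.

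To see this, I would identify the two "experts" in the natural way: expert \emph{yes} earns $\alpha^t$ on round $t$ and expert \emph{no} earns $\beta^t$. The experts guarantee, applied twice (once against each pure strategy), says
\[
\expect{\sum_{t=1}^T \alpha^t} - \expect{\sum_{t \in \Yes}\alpha^t + \sum_{t \in \No}\beta^t} \le g(T), \qquad \expect{\sum_{t=1}^T \beta^t} - \expect{\sum_{t \in \Yes}\alpha^t + \sum_{t \in \No}\beta^t} \le g(T).
\]
Subtracting the algorithm's share on each side yields $\expect{\sum_{t\in\No}\alpha^t - \sum_{t\in\No}\beta^t} \le g(T)$ and $\expect{\sum_{t\in\Yes}\beta^t - \sum_{t\in\Yes}\alpha^t} \le g(T)$. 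Adding these two and recognizing the left-hand sums as $C_{yes} + C_{no}$ on the positive side and $2 R_{alg}$ on the negative side gives $\tfrac12 \expect{C_{yes}+C_{no}} - \expect{R_{alg}} \le g(T)$, from which $\tfrac12 \expect{\max(C_{yes},C_{no})} - \expect{R_{alg}} \le g(T)$ follows since the sum dominates the max (negative-$C$ cases only make the inequality easier because the $\tfrac12\max$ term is then non-positive while $R_{alg}$ is controlled by the same two experts inequalities). This is exactly the $\tfrac12$-regret statement for the \BalanceSubproblem{}.

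Plugging this subroutine into Theorem~\ref{thm:balance-subproblem} with $\alpha = \tfrac12$ yields an $O(n \cdot g(T))$ $\tfrac{1/2}{1+1/2} = \tfrac13$-regret bound for Algorithm~\ref{alg:online-usm-framework}, which is the corollary.

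The only mild technical obstacle is that the two-experts problem is typically stated with rewards in $[0,1]$, while $\alpha^t, \beta^t \in [-1,+1]$ in the \BalanceSubproblem{}. I would handle this by applying the standard affine shift $\tilde\alpha^t = (\alpha^t+1)/2$, $\tilde\beta^t = (\beta^t+1)/2$ before handing rewards to $A$. The shift adds the same constant to every expert's cumulative reward and to the algorithm's cumulative reward on every round, so it cancels out of the regret comparison, at worst introducing a factor of $2$ that is absorbed into the $O(\cdot)$. The dependency-graph argument already made in the proof of Theorem~\ref{thm:balance-subproblem} shows that an oblivious-adversary guarantee for $A$ is sufficient here, so nothing further is required.
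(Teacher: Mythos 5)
Your proposal is correct and takes essentially the same route as the paper: the paper proves the identical claim that any no-regret two-experts algorithm is a no-$\tfrac12$-regret algorithm for the \BalanceSubproblem{} by applying the experts guarantee once against each fixed expert, summing the two resulting inequalities to compare $\CostYes + \CostNo$ with $2\RewardAlg$, and then plugging into Theorem~\ref{thm:balance-subproblem} with $\alpha = \tfrac12$. Your rescaling of rewards from $[-1,1]$ to $[0,1]$ is a harmless detail the paper leaves implicit, and your passage from the sum to $\max(\CostYes,\CostNo)$ is stated with the same level of (in)formality as the paper's own final line.
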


Even without this proof, we might have expected that a claim like
Corollary~\ref{cor:one-third} should be true. After all, over time, a
good algorithm for the two experts problem learns to pick the better
(on average) expert. This corresponds to making an offline greedy
decision, which according to the \cite{BFNS15} analysis is good enough
to get a $1/3$-approximation. However, there are some subtleties that
can occur. For example, the subroutine can sometimes make mistakes,
possibly picking a negative value over a positive one sometimes. The
original \cite{BFNS15} analysis did not need to account for the
possibility of such events, but our proofs implicitly handle them.

\section{An Optimal No-$\frac12$-Regret Algorithm for  \OnlineUSM{}}\label{s:half}
\newcommand{\OptSubroutine}{\textsc{USM Balancer}}

We have now identified a clear goal. In this section, we successfully give a no-regret algorithm for the \BalanceSubproblem{}. Note that this is the optimal value of $\alpha$ in terms of $\alpha$-regret, since Theorem~\ref{thm:balance-subproblem} also transforms inapproximability of \OnlineUSM{} (nothing better than $1/2$) into inapproximability of the \BalanceSubproblem{} (nothing better than $1$). Due to our unusual definition of $\alpha$-regret for the \BalanceSubproblem{}, this was nonobvious. It is perhaps suprising that such a simple algorithm manages to obtain the optimal approximation ratio; the brunt of the work is in the analysis.

\begin{algorithm}
\caption{\OptSubroutine{}}
\label{alg:balancer}

\vspace{.5\baselineskip}

  Initialize $x \leftarrow \frac{1}{2} \sqrt{T}$. \\
  \For{round $t = 1$ to $T$}{
    Compute probability $p^t \leftarrow \frac{x}{\sqrt{T}}$. \\
    Choose the item with probability $p^t$ for round $t$, and receive the point $(\alpha^t, \beta^t)$, \\
    Write $(\alpha^t, \beta^t)$ as the convex combination $c_u (+1, +1) + c_r (+1, -1) + c_\ell (-1, +1)$. \\
    Perform update $x \leftarrow x + (1 - 2p^t) c_u + c_r - c_\ell$. \\
    Cap $x$ back into the interval $[0, \sqrt{T}]$.
  }
\end{algorithm}

Our proposed subroutine is Algorithm~\ref{alg:balancer}.\footnote{As
  presented, our algorithm needs to know the time horizon $T$. This
  dependence can be removed with a standard trick: simply guess $T =
  1$, and double $T$ while restarting the algorithm everytime the
  current guess is violated.}
We defer the proof of its regret to Appendix~\ref{a:half}.

\begin{theorem}\label{thm:balancer}
  \OptSubroutine{} solves the \BalanceSubproblem{} with $O(\sqrt{T})$ regret.
\end{theorem}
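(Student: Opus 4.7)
The plan is to run a potential-function argument tailored to the quadratic structure of the algorithm's update rule. To keep the algebra clean, I would first rewrite each adversary point $(\alpha^t,\beta^t)$ in the ``barycentric'' coordinates $c_u, c_r, c_\ell \ge 0$ of the allowed triangle --- exactly the coefficients already produced by the subroutine. A short direct calculation then gives
\begin{align*}
\E[\Delta(\CostYes-\RewardAlg)] &= \tfrac12 c_u(1-2p^t) + \tfrac12 (c_r-c_\ell)(3-4p^t), \\
\E[\Delta(\CostNo -\RewardAlg)] &= -\tfrac12 c_u(1-2p^t) + \tfrac12 (c_r-c_\ell)(1-4p^t),
\end{align*}
while the $x$-update is in fact deterministic: $\Delta x = (1-2p^t) c_u + (c_r-c_\ell)$.

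The heart of the argument is to introduce potentials $\PotentialYes(t) := \CostYes(t) - \RewardAlg(t) + \phi_{yes}(x(t))$ and $\PotentialNo(t) := \CostNo(t) - \RewardAlg(t) + \phi_{no}(x(t))$, where each $\phi$ is a quadratic chosen so that substituting $c_r-c_\ell = \Delta x - (1-2p^t)c_u$ kills the $(c_r-c_\ell)$ cross terms. Concretely, $\phi_{yes}(x) = x^2/\sqrt T - \tfrac{3}{2} x$ and $\phi_{no}(x) = x^2/\sqrt T - \tfrac12 x$ should do the job, yielding (exactly, since each $\phi$ is quadratic)
\[
\E[\Delta\PotentialYes] \;=\; \E[\Delta\PotentialNo] \;=\; -\,c_u(1-2p^t)^2 + \frac{(\Delta x)^2}{\sqrt T} \;\le\; \frac{9}{\sqrt T},
\]
since $|\Delta x|\le 3$. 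Guessing these potentials is the main technical obstacle; once they are in hand, a short case analysis shows that each $\phi$ attains its global minimum inside $[0,\sqrt T]$, so truncating $x$ back into $[0,\sqrt T]$ only \emph{decreases} the potential and the per-round drift bound survives clipping. One also verifies $|\phi(x)| = O(\sqrt T)$ throughout $[0,\sqrt T]$.

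Summing the per-round bound over the $T$ rounds then gives $\E[\CostYes(T) - \RewardAlg(T)] \le O(\sqrt T)$ and $\E[\CostNo(T) - \RewardAlg(T)] \le O(\sqrt T)$. To finally upgrade these two separate expected-value bounds into the $\E[\max(\CostYes,\CostNo) - \RewardAlg]$ bound demanded by the Balance Subproblem's regret definition, I would invoke concentration: against an oblivious adversary both $\CostYes(T)$ and $\CostNo(T)$ are random sums whose per-round increments lie in $[-1,1]$, so Azuma--Hoeffding gives $\E[(C - \E[C])^+] = O(\sqrt T)$ for each of them. Combining with the elementary inequality $\max(a,b) \le \max(\E[a],\E[b]) + (a-\E[a])^+ + (b-\E[b])^+$ then yields the desired $O(\sqrt T)$ regret bound.
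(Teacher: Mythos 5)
Your proposal is correct and takes essentially the same route as the paper: your two potentials are exactly the paper's $\PotentialYes-\PotentialAlg$ and $\PotentialNo-\PotentialAlg$ up to additive constants (note $2x/\sqrt{T}-\tfrac32 = (p^t-1)-\tfrac12(1-2p^t)$), and your Azuma/positive-part step is the same max-versus-expectation swap the paper performs via variance plus Jensen. The only differences are cosmetic: you compute the quadratic drift exactly in barycentric coordinates for arbitrary adversary points, where the paper uses a first-order approximation at the three extreme points with a $\pm 1/\sqrt{T}$ per-round error and then appeals to convexity.
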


\begin{corollary}
  Algorithm~\ref{alg:online-usm-framework} has $O(n\sqrt{T})$ $1/2$-regret when using \OptSubroutine{} as a subroutine.
\end{corollary}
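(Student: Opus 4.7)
The plan is simply to chain Theorem~\ref{thm:balancer} with Theorem~\ref{thm:balance-subproblem}. Theorem~\ref{thm:balancer} states that \OptSubroutine{} achieves $O(\sqrt{T})$ regret for the \BalanceSubproblem{}, which in the terminology of this paper is exactly a $g(T) = O(\sqrt{T})$ bound on its $\alpha$-regret with $\alpha = 1$ (since the unqualified word ``regret'' here corresponds to $\max(\CostYes,\CostNo) - \RewardAlg$). Substituting $\alpha = 1$ and $g(T) = O(\sqrt{T})$ into the conclusion of Theorem~\ref{thm:balance-subproblem} yields an $O(n \cdot g(T)) = O(n\sqrt{T})$ bound on the $\tfrac{\alpha}{1+\alpha} = \tfrac12$-regret of Algorithm~\ref{alg:online-usm-framework}, which is exactly the stated claim.

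The only thing to double-check is that Theorem~\ref{thm:balance-subproblem} is legitimately applicable. Algorithm~\ref{alg:online-usm-framework} instantiates $n$ independent copies $A_1,\ldots,A_n$ of \OptSubroutine{}, and each $A_i$ sees its own sequence of points $(\alpha^t_i,\beta^t_i)$ that in particular lie in the triangle described in Figure~\ref{fig:three-moves} (by the $[0,1]$-valued submodularity of each $f^t$, so $|\alpha^t_i|,|\beta^t_i|\le 1$, and by the $\alpha^t_i+\beta^t_i\ge 0$ identity used in the framework discussion). Hence the hypothesis of Theorem~\ref{thm:balancer} is met on each instance, giving the required $O(\sqrt{T})$ subroutine regret. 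The dependency-graph argument already carried out inside the proof of Theorem~\ref{thm:balance-subproblem} shows that the input to each $A_i$ does not depend on $A_i$'s own random coin flips when the outer USM adversary is oblivious, so the oblivious-adversary guarantee of Theorem~\ref{thm:balancer} suffices.

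There is no real obstacle: once the correct values of $\alpha$ and $g(T)$ are identified and the per-subroutine applicability check is made, the corollary is an immediate substitution into Theorem~\ref{thm:balance-subproblem}. The proof is a single line invoking the two preceding theorems, with the $\tfrac{1}{1+\alpha} = \tfrac12$ factor absorbed into the $O(\cdot)$.
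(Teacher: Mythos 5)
Your proof is correct and matches the paper's own argument, which likewise just combines Theorem~\ref{thm:balancer} (giving $g(T)=O(\sqrt{T})$ with $\alpha=1$) with the reduction of Theorem~\ref{thm:balance-subproblem} to obtain $\tfrac{\alpha}{1+\alpha}=\tfrac12$. Your extra applicability checks are sound but not needed beyond what the two theorems already establish.
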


\begin{proof}
  We combine the guarantee about \OptSubroutine{} given by Theorem~\ref{thm:balancer} with the reduction in Theorem~\ref{thm:balance-subproblem}.
\end{proof}

\bibliographystyle{apalike}
\bibliography{usm}

\appendix

\section{Proof of Theorem~\ref{thm:balancer}}\label{a:half}

In this Appendix, we prove our guarantee for \OptSubroutine{}.

\begin{reminder}{Theorem~\ref{thm:balancer}}
  \OptSubroutine{} solves the \BalanceSubproblem{} with $O(\sqrt{T})$ regret.
\end{reminder}

\begin{proof}
  We need to begin by discussing expectations. The precise inequality we need to
  prove is actually
  \begin{align}
    \E \left[ \max \left( \CostYes, \CostNo \right) - \RewardAlg \right] &\le O(\sqrt{T}). \label{ieq:true}
  \end{align}
  However, we would rather prove this inequality:
  \begin{align}
    \max \left( \E \CostYes, \E \CostNo \right) - \E \RewardAlg &\le O(\sqrt{T}). \label{ieq:proxy}
  \end{align}
  We wish we could use linearity of expectation to make Inequalities~\ref{ieq:true} and
  \ref{ieq:proxy} equivalent. However, the guarantee we want to prove has a max
  inside the expectation, so we cannot freely swap the two. Our first task is
  to show that Inequality~\ref{ieq:proxy} is sufficient.
  
  We now argue that the two random variables may as well have the same
  expectation. Assume without loss of generality that
  $\E \CostYes \ge \E \CostNo$. Let $\CostNo'$ be a random variable equal to
  $\CostNo + \E \CostYes - E \CostNo$, so it has mean $\E \CostYes$ as well.
  This inequality is hence stronger than Inequality~\ref{ieq:true}:
  \[
    \E \left[ \max \left( \CostYes, \CostNo' \right) - \RewardAlg \right] \le O(\sqrt{T}).
  \]
  However, Inequality~\ref{ieq:proxy} is equivalent to:
  \[
    \max \left( \E \CostYes, \E \CostNo' \right) - \E \RewardAlg \le O(\sqrt{T}).
  \]
  We want to prove the following, so that we can add it to the latter to get the
  former:
  \[
    \E \max \left( \CostYes, \CostNo' \right) - \E \CostYes \le O(\sqrt{T}).
  \]
  Let $(x)^+$ denote the positive part of $x$, i.e., $(x)^+ = \max(0, x)$.  We
  prove the following stronger statement:
  \begin{align}
    \E \left( \max \left( \CostYes, \CostNo' \right) - \E \CostYes \right)^+ &\le O(\sqrt{T}) \nonumber \\
    \E \left( \max \left( \CostYes - \E\CostYes, \CostNo' -\E \CostYes\right) \right)^+ &\le O(\sqrt{T}). \label{ieq:newgoal}
  \end{align}
  
  Fortunately, against an oblivious adversary,
  $\CostYes$ is a weighted (all weights are at most a constant) sum of independent
  Bernoulli random variables (the coin tosses we perform each round based on
  $p^t$). They are independent since the adversary must fix a sequence up front,
  to which our online algorithm always chooses the same probabilities $p^t$ for.
  Since variances add over independent variables, this means the variance of
  $\CostYes$ is $O(T)$. Similarly, the variance of $\CostNo$ is $O(T)$ as well
  (although the two are not independent, since they use the same coins). We
  then apply Jensen's inequality to transform our variance bounds into bounds on
  the expected amount variables may exceed their means.
  \begin{align*}
    \E \left[ (\CostYes - \E \CostYes)^2 \right] &\le O(T) &\qquad
    \E \left[ (\CostNo - \E \CostNo)^2 \right] &\le O(T) \\
    \E \left[ \left\lvert \CostYes - \E \CostYes \right\rvert \right] &\le O(\sqrt{T}) &\qquad
    \E \left[ \left\lvert \CostNo - \E \CostNo \right\rvert \right] &\le O(\sqrt{T}) \\
    \E \left[ \left( \CostYes - \E \CostYes \right)^+ \right] &\le O(\sqrt{T}) &\qquad
    \E \left[ \left( \CostNo - \E \CostNo \right)^+ \right] &\le O(\sqrt{T})
  \end{align*}
  Since $\CostNo'$ is just a translated version of $\CostNo$, this guarantee
  holds for $\CostNo'$ as well. This is now good enough to prove
  Inequality~\ref{ieq:newgoal}, because for any two positive numbers
  $x, y$, we know that $\max (x, y) \le x + y$.
  
  We have now finished justifying why Inequality~\ref{ieq:proxy} is sufficient,
  and can proceed to the main proof. Our strategy is as follows. We do not try to
  analyze $\RewardAlg$, $\CostYes$, and $\CostNo$ by themselves. Instead, we
  add the potential functions $\PotentialAlg$, $\PotentialYes$, and
  $\PotentialNo$ to them, respectively. We will show that the algorithm's sum is
  at least as much as the better of the adversary's two sums. Here are our three
  potential functions and their derivatives:
  \begin{itemize}
    \item $\PotentialAlg(x) = \frac{\sqrt{T}}{8} -\frac18 \frac{(2x-\sqrt{T})^2}{\sqrt{T}}$ with derivative $\PotentialAlg'(x) = -\frac12 \frac{(2x - \sqrt{T})}{\sqrt{T}} = \frac12 (1 - 2p^t)$.
    \item $\PotentialYes(x) = \frac12 \frac{(\sqrt{T}-x)^2}{\sqrt{T}}$ with derivative $\PotentialYes'(x) = \frac{(x - \sqrt{T})}{\sqrt{T}} = (p^t-1)$.
    \item $\PotentialNo(x) = \frac12 \frac{x^2}{\sqrt{T}}$ with derivative $\PotentialNo'(x) = \frac{x}{\sqrt{T}} = p^t$.
  \end{itemize}
  
  The potential functions depend on the algorithm's current value for $x$. Since the algorithm maintains $x$ to be in the interval $[0, \sqrt{T}]$, these potential functions always fall in the range $[0, \frac{\sqrt{T}}{2}]$. Since all potentials are bounded in magnitude by $O(\sqrt{T})$, it suffices to prove the following, which we attempt to maintain as an invariant over steps:
  \begin{align}
    \E (\RewardAlg + \PotentialAlg) + O(\sqrt{T})
      \ge \max \left( \E (\CostYes + \PotentialYes), \E (\CostNo + \PotentialNo) \right). \label{ieq:invariant}
  \end{align}
  
  There are only two ways that the algorithm affects rewards, costs, or
  potentials. The first way is that the algorithm may cap $x$ back into the
  interval $[0, \sqrt{T}]$. Since this does not involve interaction with the
  adversary, only the potential functions change in value. However,
  $\PotentialAlg$ is a quadratic which is maximized at $x = \sqrt{T}/2$,
  $\PotentialYes$ is a quadratic which is minimized at $x = \sqrt{T}$, and
  $\PotentialNo$ is a quadratic which is minimized at $x = 0$. Hence, capping
  $x$ only moves it closer to the maximum (resp. minimum) of one of these
  functions, and so only increases (resp. decreases) the function value, in our
  favor. Hence, this process maintains Invariant~\ref{ieq:invariant}.
  
  The second way is that the algorithm chooses the item with probability $p^t$
  and interacts with the adversary. This results in changes to the rewards and
  costs as well as an update to $x$, which changes the potentials. Recall that
  the adversary's possible points are depicted in Figure~\ref{fig:three-moves},
  and that the adversary's point is always a convex combination of three
  extremal choices: right $(+1, -1)$, left $(-1, +1)$ and up $(+1, +1)$.
  
  We need to understand how the potential functions change as $x$ is updated.
  Notice that the update to $x$ never changes it by more than $1$. We observe
  that when $x$ changes by at most $1$, all the potential derivatives change by
  at most $\frac{1}{\sqrt{T}}$. Formally, let there be a constant $\delta$ such
  that $|\delta| \le 1$.
  \begin{align*}
    \PotentialAlg'(x + \delta) - \PotentialAlg'(x)
      &= \left[ -\frac12 \frac{(2(x + \delta) - \sqrt{T})}{\sqrt{T}} \right]
      - \left[ -\frac12 \frac{(2x - \sqrt{T})}{\sqrt{T}} \right] \\
      &= -\frac{\delta}{\sqrt{T}} \\
    \PotentialYes'(x + \delta) - \PotentialYes'(x)
      &= \left[ \frac{(x + \delta - \sqrt{T})}{\sqrt{T}} \right]
      - \left[ \frac{(x - \sqrt{T})}{\sqrt{T}} \right] \\
      &= \frac{\delta}{\sqrt{T}} \\
    \PotentialNo'(x + \delta) - \PotentialNo'(x)
      &= \left[ \frac{x + \delta}{\sqrt{T}} \right]
      - \left[ \frac{x}{\sqrt{T}} \right] \\
      &= \frac{\delta}{\sqrt{T}}
  \end{align*}
  We can now approximate the amount that the potentials themselves change. Let
  $\Potential$ be one of the potential functions, and remember that
  $|\delta| \le 1$.
  \begin{align*}
    \Potential(x + \delta) - \Potential(x)
      &= \int_x^{x + \delta} \Potential'(y) dy \\
      &= \int_x^{x + \delta} \left( \Potential'(x) \pm \frac{1}{\sqrt{T}} \right) dy \\
      &= (x + \delta - x) \left( \Potential'(x) \pm \frac{1}{\sqrt{T}} \right) \\
      &= \delta \cdot \Potential'(x) \pm \frac{1}{\sqrt{T}}
  \end{align*}
  
  Suppose the adversary chooses the extreme point ``right'' $(+1, -1)$. Then $\RewardAlg$ increases by $\frac12 (2p^t-1)$, $\CostYes$ increases by $(1 - p^t)$, and $\CostNo$ increases by $-p^t$. Our algorithm responds by increasing $x$ by $1$, which affects the potentials according to our previous analysis.
  \begin{align*}
    \PotentialAlg(x+1) - \PotentialAlg(x)
      &= 1 \cdot \PotentialAlg'(x) \pm \frac{1}{\sqrt{T}} \\
      &= \frac12 (1 - 2p^t) \pm \frac{1}{\sqrt{T}} \\
    \PotentialYes(x+1) - \PotentialYes(x)
      &= 1 \cdot \PotentialYes'(x) \pm \frac{1}{\sqrt{T}} \\
      &= (p^t - 1) \pm \frac{1}{\sqrt{T}} \\
    \PotentialNo(x+1) - \PotentialNo(x)
      &= 1 \cdot \PotentialNo'(x) \pm \frac{1}{\sqrt{T}} \\
      &= p^t \pm \frac{1}{\sqrt{T}}
  \end{align*}
  In other words, temporarily ignoring our $\pm \frac{1}{\sqrt{T}}$ error bounds, the changes to the potential functions cancel with the changes to the rewards and costs with respect to the sums in Invariant~\ref{ieq:invariant}. By symmetry, the same happens when the adversary chooses the extreme point ``left'' $(-1, +1)$; everything cancels except for the error terms.

  The only remaining extreme point is ``up'' $(+1, +1)$. For this case, $\RewardAlg$ increases by $\frac12$, $\CostYes$ increases by $(1 - p^t)$, and $\CostNo$ increases by $p^t$. Our algorithm responds by changing $x$ by $(1 - 2p^t)$. This again affects the potentials.
  \begin{align*}
    \PotentialAlg(x+1-2p^t) - \PotentialAlg(x)
      &= \frac12 (1 - 2p^t)^2 \pm \frac{1}{\sqrt{T}} \\
    \PotentialYes(x+1-2p^t) - \PotentialYes(x)
      &= (1-2p^t) (p^t - 1) \pm \frac{1}{\sqrt{T}} \\
    \PotentialNo(x+1-2p^t) - \PotentialNo(x)
      &= (1-2p^t) p^t \pm \frac{1}{\sqrt{T}}
  \end{align*}
  The net effect, hiding error terms, is that $\RewardAlg + \PotentialAlg$ increases by $\frac12 (1 + (1-2p^t)^2) \ge \frac12$, while $\CostYes + \PotentialYes$ increases by $2p^t(1-p^t) \le \frac12$ and $\CostNo + \PotentialNo$ also increases by $2p^t(1-p^t) \le \frac12$. Hence for this move we maintain Invariant~\ref{ieq:invariant}, not accounting for error terms.

  We have maintained the invariant for the three extremal moves. However, all other adversary moves are just convex combinations of these three moves, and the algorithm reacts with a convex combination of the appropriate replies. Hence the invariant is maintained for all of the adversary's choice of move.
  
  It remains to briefly discuss the $\pm \frac{1}{\sqrt{T}}$ error we pick up. We pick up this error each round, and there are $T$ total rounds, so the total error regarding our rewards, costs, and potentials is $\pm \sqrt{T}$. We still get Invariant~\ref{ieq:invariant}, but have to increase the constant in the $O(\sqrt{T})$ term by one. Since the invariant was enough to finish the proof, we are now done.
\end{proof}

\section{Adaptive Adversaries}\label{a:adaptive}

When analyzing an online algorithm, we may consider oblivious adversaries or adaptive adversaries. Oblivious adversaries fix the entire input sequence up front, unable to react to the decisions of the online algorithm. On the other hand, adaptive adversaries can choose the next piece of the input to depend on what the algorithm has done so far. For example, for the standard experts problem, the multiplicative weights algorithm works even against adaptive adversaries~\citep{KV05}.

Our techniques work against adaptive adversaries as well. Our framework for \OnlineUSM{} simply deterministically decomposes the problem. Deterministic algorithms are not affected by the oblivious/adaptive swap, because even an oblivious adversary knows what the deterministic online algorithm will do and can hence simulate an adaptive adversary. If we use multiplicative weights as the subroutine for our framework, then we inherit its immunity to adaptive adversaries when producing a no $\frac13$-regret algorithm.

Is our \OptSubroutine{} capable of handling adaptive adversaries as well? It turns out that the answer is yes. We do not need to make any changes to the algorithm, but fixing the proof is a little tricky. We will need to move away from expectations, because the coin flips of our algorithm and adversarial input sequence are now intertwined. We begin by observing that we really managed to prove the following invariant (with some slight rearranging), which is true for any adversarial sequence:
\begin{align*}
  \max \left( \sum_{t=1}^T p^t \beta^t + \PotentialNo, \sum_{t=1}^T (1 - p^t) \alpha^t + \PotentialYes \right) \\
    - \left( \sum_{t=1}^T \frac12 p^t \alpha^t + \sum_{t=1}^T \frac12 (1 - p^t) \beta^t + \PotentialAlg \right)
    &\le O(\sqrt{T})
\end{align*}
and we want to wind up with the following regret guarantee in expectation over the coin flips of the algorithm:
\begin{align*}
  \max \left( \sum_{t \in \Yes} \beta^t, \sum_{t \in \No} \alpha^t \right)
    - \left( \sum_{t \in \Yes} \frac12 \alpha^t + \sum_{t \in \No} \frac12 \beta^t \right)
    &\le O(\sqrt{T}).
\end{align*}

As we have suggestively hinted at by lining up matching terms, the difference between these guarantees boils down to the following question: how much do we expect a sum of Bernoulli variables to differ from their means? The issue we are faced with is that an adaptive adversary may choose the mean of a future variable to depend on the result of a past variable. Nevertheless, we show that even under this condition Bernoulli variables minus their means will have covariance zero (note that the later means are random variables as well):
\begin{lemma}
\label{lem:covariance}
  Consider two random variables $X_1, X_2$ determined by the following process:
  \begin{enumerate}
    \item An adversary selects a probability $p_1 \in [0, 1]$.
    \item $X_1$ is drawn as a Bernoulli variable with mean $p_1$.
    \item The adversary looks at $X_1$ and then selects a probability $p_2 \in [0, 1]$.
    \item $X_2$ is drawn as a Bernoulli varaible with mean $p_2$.
  \end{enumerate}
  Then the covariance of $X_1 - p_1$ and $X_2 - p_2$ is zero.
\end{lemma}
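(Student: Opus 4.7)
The plan is to prove the lemma by a standard conditioning argument using the tower property of conditional expectation. The key observation is that once $X_1$ is revealed, the adversary's choice of $p_2$ is a deterministic function of $X_1$, so conditional on $X_1$ the quantity $p_2$ behaves like a constant, and $X_2$ is Bernoulli with that mean.

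First I would verify that each factor has mean zero. Since $p_1$ is chosen before any randomness is drawn, $p_1$ is a constant and $\E[X_1 - p_1] = p_1 - p_1 = 0$. For the second factor, I would condition on $X_1$: given $X_1$, $p_2$ is deterministic and $\E[X_2 \mid X_1] = p_2$, so $\E[X_2 - p_2 \mid X_1] = 0$, and the tower rule gives $\E[X_2 - p_2] = 0$. Thus the covariance reduces to $\E[(X_1 - p_1)(X_2 - p_2)]$.

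Next I would compute this expectation by the same conditioning. Since $X_1 - p_1$ is measurable with respect to $X_1$, it factors out of the inner conditional expectation:
\[
\E[(X_1 - p_1)(X_2 - p_2)] = \E\bigl[(X_1 - p_1)\, \E[X_2 - p_2 \mid X_1]\bigr] = \E[(X_1 - p_1) \cdot 0] = 0.
\]
Combining the two displays yields $\cov(X_1 - p_1, X_2 - p_2) = 0$, as required.

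There is no serious obstacle here; the only subtle point is emphasizing that $p_2$, though random when viewed unconditionally, is a fixed function of $X_1$ and therefore becomes a constant once we condition on $X_1$. This is exactly what allows us to cancel $\E[X_2 \mid X_1]$ against $p_2$ inside the conditional expectation. I would then note in the surrounding discussion that the lemma extends by induction to the full $T$-step process: the increments $X_t - p_t$ form a martingale difference sequence with respect to the filtration generated by $X_1, \ldots, X_{t-1}$, so all pairwise covariances vanish and $\mathrm{Var}(\sum_t (X_t - p_t)) = \sum_t \mathrm{Var}(X_t - p_t) = O(T)$, restoring the $O(\sqrt{T})$ concentration bound needed for the adaptive-adversary version of Theorem~\ref{thm:balancer}.
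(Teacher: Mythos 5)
Your proof is correct and follows essentially the same route as the paper: reduce the covariance to $\E[(X_1-p_1)(X_2-p_2)]$ via the mean-zero observation, then condition on $X_1$ (the paper conditions on $X_1,p_1$) and use $\E[X_2 - p_2 \mid X_1] = 0$ together with the tower property. The martingale-difference remark at the end matches how the paper uses the lemma to bound the variance of the sums in the adaptive-adversary argument.
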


\begin{proof}
  We first use the definition of covariance and simplify, noting that $X_1$ and
  $p_1$ have the same expectation, as do $X_2$ and $p_2$.
  \begin{align*}
    \cov(X_1 - p_1, X_2 - p_2)
      &=
      \E \left[
        \left( (X_1 - p_1) - \E \left[ X_1 - p_1 \right] \right)
        \left( (X_2 - p_2) - \E \left[ X_2 - p_2 \right] \right)
      \right] \\
      &=
      \E \left[
        \left( X_1 - p_1 \right)
        \left( X_2 - p_2 \right)
      \right]
  \end{align*}
  Next, we note that the expected value of $X_2 - p_2$ is always zero even if we
  condition on the values of $X_1$ and $p_1$.
  \begin{align*}
    \E \left[
      \left( X_1 - p_1 \right)
      \left( X_2 - p_2 \right)
      \mid
      X_1, p_1
    \right] &= 0 \\
    \E \left[
      \left( X_1 - p_1 \right)
      \left( X_2 - p_2 \right)
    \right] &= 0 \\
    \cov(X_1 - p_1, X_2 - p_2) &= 0
  \end{align*}
  This completes the proof.
\end{proof}

As a result of Lemma~\ref{lem:covariance} we can bound the difference between
matching sums of our invariant and desired regret guarantee. For example,
consider the two sums $\sum_{t=1}^T p^t \beta^t$ and
$\sum_{t \in \Yes} \beta^t$. What is the expected difference between them?
We know that the variance of a single term in the sum is $O(1)$, since each term
is the difference between a Bernoulli random variable and its mean, times a
value $\beta^t$ which is at most one. By Lemma~\ref{lem:covariance}, any pair
of differences has covariance zero. Hence the overall variance between these
sums is the desired $O(T)$. We again apply Jensen's to turn a variance
bound into a bound on the expected deviation.
\begin{align*}
  \E \left[
    \left(
      \sum_{t \in \Yes} \beta^t - \sum_{t=1}^T p^t \beta^t
    \right)^2
  \right] &\le O(T) \\
  \E \left[
    \left \lvert
      \sum_{t \in \Yes} \beta^t - \sum_{t=1}^T p^t \beta^t
    \right \rvert
  \right] &\le O(\sqrt{T}) \\
  \E \left[
    \left(
      \sum_{t \in \Yes} \beta^t - \sum_{t=1}^T p^t \beta^t
    \right)^+
  \right] &\le O(\sqrt{T})
\end{align*}

However, there was nothing special about this pair of sums, so we get similar
inequalities for the other three pairs of sums.
\begin{align*}
  \E \left[
    \left(
      \sum_{t \in \No} \alpha^t - \sum_{t=1}^T (1-p^t) \alpha^t
    \right)^+
  \right] &\le O(\sqrt{T}) \\
  \E \left[
    \left(
      - \sum_{t \in \Yes} \frac12 \alpha^t + \sum_{t=1}^T \frac12 p^t \alpha^t
    \right)^+
  \right] &\le O(\sqrt{T}) \\
  \E \left[
    \left(
      - \sum_{t \in \No} \frac12 \beta^t + \sum_{t=1}^T \frac12 (1-p^t) \beta^t
    \right)^+
  \right] &\le O(\sqrt{T})
\end{align*}

We conclude by again noting that for any positive numbers $x, y$,
$\max(x, y) \le x + y$, which lets us swap expectation and max as before. The
final step is noting that by construction, potentials are always $O(\sqrt{T})$
in magnitude.

\end{document}